\newtheorem{thm}{Theorem}
\newtheorem{lemma}{Lemma}
\newtheorem{defn}{Definition}
\newcommand{\textremovedforicml}[1]{}
\newcommand{\lrbrack}[1]{\left[#1\right]}
\newcommand{\lrbrace}[1]{\left\{#1\right\}}
\newcommand{\lrparen}[1]{\left(#1\right)}
\DeclarePairedDelimiter{\norm}{\|}{\|}
\DeclarePairedDelimiter{\abs}{|}{|}
\newcommand{\grad}{\nabla}
\newcommand{\evalat}[1]{\big\rvert_{#1}}
\newcommand{\E}{\mathbb{E}}
\newcommand{\Var}{\mathrm{Var}}
\newcommand{\Eb}[2]{\E_{#1}\left[#2\right]}
\newcommand{\normone}[1]{\norm{#1}_1}
\newcommand{\pith}{\pi_{\theta}}
\newcommand{\cS}{\mathcal{S}}
\newcommand{\cA}{\mathcal{A}}
\newcommand{\cP}{P}
\newcommand{\Real}{\mathbb{R}}
\newcommand{\Ppi}{P_{\pi}}
\newcommand{\rhopi}{\rho_{\pi}}
\newcommand{\Qpi}{Q_{\pi}}
\newcommand{\tilth}{\tilde{\theta}}
\newcommand{\kl}[2]{D_{\rm KL}(#1 \ \| \ #2)}
\newcommand{\tilpi}{\tilde{\pi}}
\newcommand{\gradth}{\grad_{\theta}}
\newcommand{\tv}[2]{D_{TV}(#1 \ \| \ #2)}
\newcommand{\half}{\frac{1}{2}}
\newcommand{\tilG}{\tilde{G}}
\newcommand{\dP}{\Delta}
\newcommand{\Ppitil}{P_{\tilpi}}
\newcommand{\defeq}{\vcentcolon=}
\newcommand{\Api}{A_{\pi}}
\newcommand{\Vpi}{V_{\pi}}
\newcommand{\thold}{\theta_{\mathrm{old}}}
\newcommand{\thnew}{\theta_{\mathrm{new}}}
\newcommand{\piold}{\pi_{\mathrm{old}}}
\newcommand{\pinew}{\pi_{\mathrm{new}}}
\newcommand{\given}{|}
\newcommand{\Atari}{Atari}
\newcommand{\delay}{l}
\DeclareMathOperator*{\argmax}{arg\,max}
\newcommand{\maxtv}{\ensuremath D^{\rm max}_{\rm TV}}
\newcommand{\meankl}[1]{{\ensuremath \overline D_{\rm KL}^{#1}}}
\newcommand{\maxkl}{{\ensuremath D^{\rm max}_{\rm KL}}}
\newcommand{\maximize}{\operatorname*{maximize}}
\newcommand{\isd}{q}
\newcommand{\Apithold}{A_{\thold}}
\newcommand{\Qpithold}{Q_{\thold}}
\newcommand{\vine}{\textit{vine}}
\newcommand{\singlepath}{\textit{single path}}
\newcommand{\net}{\operatorname{NeuralNet}}
\newcommand{\mean}{\operatorname{mean}}
\newcommand{\stdev}{\operatorname{stdev}}
\newcommand{\muth}{\mu_{\theta}}
\newcommand{\muthold}{\mu_{\mathrm{old}}}
\newcommand{\klofmu}{\operatorname{kl}}
\newcommand{\maxadv}{\max_{s,a}{\abs*{\Api(s,a)}}}
\titlespacing{\section}{0pt}{0pt}{0pt}
\newcommand{\footnoteremovedforicml}[1]{}
\icmltitlerunning{Trust Region Policy Optimization}
\begin{document}
\twocolumn[
\icmltitle{Trust Region Policy Optimization}

\icmlauthor{John Schulman}{joschu@eecs.berkeley.edu}
\icmlauthor{Sergey Levine}{slevine@eecs.berkeley.edu}
\icmlauthor{Philipp Moritz}{pcmoritz@eecs.berkeley.edu}
\icmlauthor{Michael Jordan}{jordan@cs.berkeley.edu}
\icmlauthor{Pieter Abbeel}{pabbeel@cs.berkeley.edu}

\icmladdress{University of California, Berkeley, Department of Electrical Engineering and Computer Sciences}

\author{John Schulman, Sergey Levine, Philipp Moritz, Michael I. Jordan, Pieter Abbeel}

\vskip 0.15in
]

\begin{abstract}
We describe an iterative procedure for optimizing policies, with guaranteed monotonic improvement.
By making several approximations to the theoretically-justified procedure, we develop a practical algorithm, called Trust Region Policy Optimization (TRPO).
This algorithm is similar to natural policy gradient methods and is effective for optimizing large nonlinear policies such as neural networks.
Our experiments demonstrate its robust performance on a wide variety of tasks: learning simulated robotic swimming, hopping, and walking gaits; and playing Atari games using images of the screen as input.
Despite its approximations that deviate from the theory, TRPO tends to give monotonic improvement, with little tuning of hyperparameters.
\end{abstract}

\section{Introduction}

Most algorithms for policy optimization can be classified into three broad categories:
(1) policy iteration methods, which alternate between estimating the value function under the current policy and improving the policy \cite{bert}; (2) policy gradient methods, which use an estimator of the gradient of the expected return (total reward) obtained from sample trajectories \cite{ps-rlmsp-08} (and which, as we later discuss, have a close connection to policy iteration); and (3) derivative-free optimization methods, such as the cross-entropy method (CEM) and covariance matrix adaptation (CMA), which treat the return as a black box function to be optimized in terms of the policy parameters \cite{szita2006learning}.

General derivative-free stochastic optimization methods such as CEM and CMA are preferred on many problems, because they achieve good results while being simple to understand and implement.
For example, while Tetris is a classic benchmark problem for approximate dynamic programming (ADP) methods\footnoteremovedforicml{ADP methods: algorithms based on exact algorithms for solving finite MDPs, which include policy iteration, value iteration, and the linear programming formulation of the Bellman equations \cite{bertsekas2011approximate}.}, stochastic optimization methods are difficult to beat on this task \cite{ggs-adpfp-13}.
For continuous control problems, methods like CMA have been successful at learning control policies for challenging tasks like locomotion when provided with hand-engineered policy classes with low-dimensional parameterizations \cite{wampler2009optimal}.
The inability of ADP and gradient-based methods to consistently beat gradient-free random search is unsatisfying, since gradient-based optimization algorithms enjoy much better sample complexity guarantees than gradient-free methods \cite{nemirovski2005efficient}.\footnoteremovedforicml{
The oracle complexity model of optimization assumes that we have an oracle that we can query to obtain a noisy measurement of the function value or subgradient, and it asks how many queries are required to achieve a given optimization error.
With an oracle that gives us subgradients, there are upper bounds on complexity that do not depend on dimensionality $d$ of the underlying space \cite{nemirovski2005efficient}.
However, the complexity scales as $d^2$ if we are given single noisy function evaluations \cite{shamir2012complexity}  and $d$ if we are given pairs of evaluations with the same noise sample \cite{duchi2013optimal}.}
Continuous gradient-based optimization has been very successful at learning function approximators for supervised learning tasks with huge numbers of parameters, and extending their success to reinforcement learning would allow for efficient training of complex and powerful policies.

In this article, we first prove that minimizing a certain surrogate objective function guarantees policy improvement with non-trivial step sizes.
Then we make a series of approximations to the theoretically-justified algorithm, yielding a practical algorithm, which we call trust region policy optimization (TRPO).
We describe two variants of this algorithm: first, the \textit{single-path} method, which can be applied in the model-free setting; second, the \textit{vine} method, which requires the system to be restored to particular states, which is typically only possible in simulation.
These algorithms are scalable and can optimize nonlinear policies with tens of thousands of parameters, which have previously posed a major challenge for model-free policy search \cite{dnp-spsr-13}.
In our experiments, we show that the same TRPO methods can learn complex policies for swimming, hopping, and walking, as well as playing Atari games directly from raw images.

\section{Preliminaries} \label{sec:preliminaries}

Consider an infinite-horizon discounted Markov decision process (MDP), defined by the tuple $(\cS, \cA, \cP, r, \rho_0, \gamma)$,
where $\cS$ is a finite set of states, $\cA$ is a finite set of actions,  $\cP: \cS \times \cA \times \cS \rightarrow \Real$ is the transition probability distribution, $r: \cS \rightarrow \Real$ is the reward function, $\rho_0: \cS \rightarrow \Real$ is the distribution of the initial state $s_0$, and $\gamma \in (0,1)$ is the discount factor.

Let $\pi$ denote a stochastic policy $\pi: \cS \times \cA \rightarrow [0,1]$, and let $\eta(\pi)$ denote its expected discounted reward:
\begin{align*}
&\eta(\pi) = \Eb{s_0,a_0,\dots}{\sum_{t=0}^{\infty} \gamma^t r(s_t)}, \text{ where}\\
&s_0 \sim \rho_0(s_0),\ a_{t} \sim \pi(a_t \given  s_t), \ s_{t+1} \sim P(s_{t+1} \given  s_t, a_t).\nonumber
\end{align*}
We will use the following standard definitions of the state-action value function $\Qpi$, the value function $\Vpi$, and the advantage function $\Api$:
\begin{align*}
&\Qpi(s_t, a_t) = \Eb{s_{t+1},a_{t+1},\dots}{\sum_{\delay=0}^{\infty} \gamma^{\delay} r(s_{t+\delay})}, \nonumber\\
&\Vpi(s_t) =\ \Eb{a_t,s_{t+1},\dots}{\sum_{\delay=0}^{\infty} \gamma^{\delay} r(s_{t+\delay})},\\
&A_{\pi}(s,a) =\  Q_{\pi}(s,a) - V_{\pi}(s), \text{ where} \\
&\quad a_{t} \sim \pi(a_t \given  s_t), s_{t+1} \sim P(s_{t+1} \given  s_t, a_t) \text{\ for $t\ge 0$}. \nonumber
\end{align*}
The following useful identity expresses the expected return of another policy $\tilpi$ in terms of the advantage over $\pi$, accumulated over timesteps (see \citet{kakade2002approximately} or Appendix \ref{sec:couplingproof} for proof):
\begin{align}
&\eta(\tilpi) = \eta(\pi) +  \Eb{s_0, a_0, \dots \sim \tilpi}{\sum_{t=0}^{\infty} \gamma^t A_{\pi}(s_t, a_t)} \label{eq:advid}
\end{align}
where the notation $\Eb{s_0, a_0, \dots \sim \tilpi}{\dots}$ indicates that actions are sampled $a_t \sim \tilpi(\cdot \given s_t)$.
Let $\rho_{\pi}$ be the (unnormalized) discounted visitation frequencies
\begin{align}
\rho_{\pi}(s) \!=\! P(s_0=s) \!+\! \gamma P(s_1= s) \!+\! \gamma^2 P(s_2 = s) \!+\! \dots, \nonumber
\end{align}
where $s_0 \sim \rho_0$ and the actions are chosen according to $\pi$.
We can rewrite \Cref{eq:advid} with a sum over states instead of timesteps:
\begin{align}
\eta(\tilpi)
&= \eta(\pi) +  \sum_{t=0}^{\infty} \sum_s P(s_t = s \given \tilpi)\sum_a \tilpi(a \given s) \gamma^t A_{\pi}(s, a) \nonumber\\
&= \eta(\pi) +  \sum_s  \sum_{t=0}^{\infty} \gamma^t P(s_t = s \given \tilpi)\sum_a \tilpi(a \given s) A_{\pi}(s, a) \nonumber\\
&= \eta(\pi) + \sum_s \rho_{\tilpi}(s) \sum_a \tilpi(a \given  s) \Api(s,a).
\label{eq:etaexact}
\end{align}
This equation implies that any policy update $\pi \rightarrow \tilpi$ that has a nonnegative expected advantage at \textit{every} state $s$, i.e., $\sum_a \tilpi(a\given s) A_{\pi}(s,a) \ge 0$, is guaranteed to increase the policy performance $\eta$, or leave it constant in the case that the expected advantage is zero everywhere.
This implies the classic result that the update performed by exact policy iteration, which uses the deterministic policy $\tilpi(s) = \argmax_a \Api(s, a)$, improves the policy if there is at least one state-action pair with a positive advantage value and nonzero state visitation probability, otherwise the algorithm has converged to the optimal policy.
However, in the approximate setting, it will typically be unavoidable, due to estimation and approximation error\footnoteremovedforicml{
By estimation error, we are referring to the fact certain quantities (like $Q$-values) are estimated by sampling, and there some random error when a finite number of samples is used.
By approximation error, we are referring to the fact that approximate policies and value functions (rather than lookup tables) which can't exactly  represent the true value functions or optimal policies.
},
that there will be some states $s$ for which the expected advantage is negative, that is, $\sum_a \tilpi(a \given  s) A_{\pi}(s,a) < 0$.
The complex dependency of $\rho_{\tilde{\pi}}(s)$ on $\tilde{\pi}$ makes Equation~(\ref{eq:etaexact}) difficult to optimize directly. Instead, we introduce the following local approximation to $\eta$:
\begin{align}
L_{\pi}(\tilpi) = \eta(\pi) + \sum_s \rho_{\pi}(s) \sum_a \tilpi(a \given  s) \Api(s,a). \label{eq:adv}
\end{align}
Note that $L_{\pi}$ uses the visitation frequency $\rhopi$ rather than $\rho_{\tilpi}$, ignoring changes in state visitation density due to changes in the policy.
However, if we have a parameterized policy $\pith$, where $\pith(a\given s)$ is a differentiable function of the parameter vector $\theta$, then $L_{\pi}$ matches $\eta$ to first order (see  \citet{kakade2002approximately}).
That is, for any parameter value $\theta_0$,
\begin{align}
L_{\pi_{\theta_0}}(\pi_{\theta_0}) &= \eta(\pi_{\theta_0}), \nonumber \\
\gradth L_{\pi_{\theta_0}}(\pith)\evalat{\theta=\theta_0} &= \gradth \eta(\pith)\evalat{\theta=\theta_0}. \label{eq:gradlgradeta}
\end{align}
\Cref{eq:gradlgradeta} implies that a sufficiently small step $\pi_{\theta_0} \rightarrow \tilpi$ that improves $L_{\pi_{\thold}}$ will also improve $\eta$, but does not give us any guidance on how big of a step to take.

To address this issue, \citet{kakade2002approximately} proposed a policy updating scheme called conservative policy iteration, for which they could provide explicit lower bounds on the improvement of $\eta$.
To define the conservative policy iteration update, let $\piold$ denote the current policy, and let $\pi' = \argmax_{\pi'} L_{\piold}(\pi')$.
The new policy $\pinew$ was defined to be the following mixture:
\begin{align}
\pinew(a\given s) = (1-\alpha) \piold(a\given s) + \alpha \pi'(a\given s). \label{eq:cpi}
\end{align}
Kakade and Langford derived the following lower bound:
\begin{align}
\eta(\pinew)
&\!\ge\! L_{\piold}(\pinew) - \frac{2 \epsilon \gamma  }{(1-\gamma)^2} \alpha^2 \nonumber\\
&\text{ where } \epsilon = \max_s \abs*{ \Eb{a \sim \pi'(a\given s)} {\Api(s,a) }}. \label{eq:kakbound0}
\end{align}
(We have modified it to make it slightly weaker but simpler.)
Note, however, that so far this bound only applies to mixture policies generated by Equation~(\ref{eq:cpi}). This policy class is unwieldy and restrictive in practice, and it is desirable for a practical policy update scheme to be applicable to all general stochastic policy classes.

\section{Monotonic Improvement Guarantee for General Stochastic Policies} \label{sec:improvetheory}

\Cref{eq:kakbound0}, which applies to conservative policy iteration, implies that a policy update that improves the right-hand side is guaranteed to improve the true performance $\eta$.
Our principal theoretical result is that the policy improvement bound in \Cref{eq:kakbound0} can be extended to general stochastic policies, rather than just mixture polices, by replacing $\alpha$ with a distance measure between $\pi$ and $\tilpi$, and changing the constant $\epsilon$ appropriately. Since mixture policies are rarely used in practice, this result is crucial for extending the improvement guarantee to practical problems. The particular distance measure we use is the total variation divergence, which is defined by
$\tv{p}{q} = \half \sum_i \abs{p_i - q_i}$ for discrete probability distributions $p,q$.\footnote{Our result is straightforward to extend to continuous states and actions by replacing the sums with integrals.}
Define $\maxtv(\pi,\tilpi)$ as
\begin{align}
\maxtv(\pi,\tilpi) = \max_s \tv{\pi(\cdot \given  s)}{\tilpi(\cdot \given  s) }. \label{eq:maxtv}
\end{align}
\begin{thm}\label{thm:impthm}
Let $\alpha = \maxtv(\piold,\pinew)$.
Then the following bound holds:
\begin{align}
&\eta(\pinew) \ge L_{\piold}(\pinew) - \frac{4 \epsilon \gamma  }{(1-\gamma)^2} \alpha^2 \nonumber\\
&\quad\text{ where } \epsilon = \maxadv
\end{align}
\end{thm}
We provide two proofs in the appendix.
The first proof extends Kakade and Langford's result
using the fact that the random variables from two distributions with total variation divergence less than $\alpha$ can be coupled, so that they are equal with probability $1-\alpha$.
The second proof uses perturbation theory.

Next, we note the following relationship between the total variation divergence and the KL divergence (\citet{pollardbook}, Ch.
3):
$\tv{p}{q}^2 \le \kl{p}{q}$.
Let  $\maxkl(\pi,\tilpi) = \max_s \kl{\pi(\cdot \given  s)}{\tilpi(\cdot \given  s)}$. The following bound then follows directly from \Cref{thm:impthm}:
\begin{align}
&\eta(\tilpi)  \ge L_{\pi}(\tilpi) - C \maxkl(\pi,\tilpi), \nonumber\\
&\qquad \text{ where } C = \frac{4 \epsilon \gamma}{(1-\gamma)^2}. \label{eq:klminorizer}
\end{align}
\Cref{alg:pialg} describes an approximate policy iteration scheme based on the policy improvement bound in Equation~(\ref{eq:klminorizer}). Note that for now, we assume exact evaluation of the advantage values $\Api$.

\begin{algorithm}
\begin{algorithmic}
\State Initialize $\pi_0$.
\For{$i=0,1,2,\dots$ until convergence}
\State Compute all advantage values $A_{\pi_i}(s,a)$.
\State Solve the constrained optimization problem
\begin{align*}
\qquad\pi_{i+1} &=\argmax_{\pi} \lrbrack{ L_{\pi_i}(\pi) - C \maxkl(\pi_i,\pi) } \nonumber \\
&\!\!\!\!\!\text{ where $C=4\epsilon\gamma/(1-\gamma)^2$}\\
&\!\!\!\!\!\text{ and } L_{\pi_i}(\pi) \!=\! \eta(\pi_i) \!+\! \sum_s \rho_{\pi_i}\!(s)\! \sum_a \! \pi(a \given  s) A_{\pi_i}(s,a) \nonumber
\end{align*}
\EndFor
\end{algorithmic}
\caption{Policy iteration algorithm guaranteeing non-decreasing expected return $\eta$
\label{alg:pialg}}
\end{algorithm}
It follows from \Cref{eq:klminorizer} that \Cref{alg:pialg} is guaranteed to generate a monotonically improving sequence of policies
$\eta(\pi_0) \le \eta(\pi_1) \le \eta(\pi_2) \le \dots$.
To see this, let $M_i(\pi) = L_{\pi_i}(\pi) - C \maxkl(\pi_i, \pi)$. Then
\begin{align}
&\eta(\pi_{i+1})
\ge  M_i(\pi_{i+1}) \text{ by Equation~\eqref{eq:klminorizer}} \nonumber \\
&\eta(\pi_{i})
=  M_i(\pi_i), \text{ therefore,} \nonumber \\
&\eta(\pi_{i+1}) - \eta(\pi_i) \ge M_i(\pi_{i+1}) - M(\pi_i). \label{eq:mmalgs}
\end{align}
Thus, by maximizing $M_i$ at each iteration, we guarantee that the true objective $\eta$ is non-decreasing.
This algorithm is a type of minorization-maximization (MM) algorithm \cite{hunter2004tutorial}, which is a class of methods that also includes expectation maximization.
In the terminology of MM algorithms, $M_i$ is the surrogate function that minorizes $\eta$ with equality at $\pi_i$.
This algorithm is also reminiscent of proximal gradient methods and mirror descent.\footnoteremovedforicml{
The policy update algorithm we have described has precisely the form of a proximal gradient algorithm \cite{parikh2013proximal} (with a non-quadratic penalty function).
That is, we maximize an affine approximation to the objective (note that $L$ is affine in terms of the probabilities) plus a regularization term (KL divergence) that reduces the step length.
Relatedly, the policy update is the same update that is performed in online mirror descent \cite{shalev2011online}, using entropy as the regularizer.
The KL divergence arises as the Bregman divergence from the entropy regularizer; this form of online mirror descent is commonly used for optimizing over probability distributions \cite{kivinen1997exponentiated}.}

\textit{Trust region policy optimization}, which we propose in the following section, is an approximation to \Cref{alg:pialg}, which uses a constraint on the KL divergence rather than a penalty to robustly allow large updates.

\section{Optimization of Parameterized Policies} \label{sec:parameterized}

In the previous section, we considered the policy optimization problem independently of the parameterization of $\pi$ and under the assumption that the policy can be evaluated at all states. We now describe how to derive a practical algorithm from these theoretical foundations, under finite sample counts and arbitrary parameterizations.

Since we consider parameterized policies $\pith(a \given  s)$ with parameter vector $\theta$,
we will overload our previous notation to use functions of $\theta$ rather than $\pi$,
e.g. $\eta(\theta)  \defeq \eta(\pith)$, $L_{\theta}(\tilth) \defeq L_{\pith}(\pi_{\tilth})$, and $\kl{\theta}{\tilth} \defeq \kl{\pi_{\theta}}{\pi_{\tilth}}$.
We will use $\thold$ to denote the previous policy parameters that we want to improve upon.

The preceding section showed that $\eta(\theta) \ge L_{\thold}(\theta)-C \maxkl(\thold, \theta)$, with equality at $\theta=\thold$.
Thus, by performing the following maximization, we are guaranteed to improve the true objective $\eta$:
\begin{align*}
\maximize_{\theta} \lrbrack {L_{\thold}(\theta) - C \maxkl(\thold, \theta) }.
\end{align*}
In practice, if we used the penalty coefficient $C$ recommended by the theory above, the step sizes would be very small.
One way to take larger steps in a robust way is to use a constraint on the KL divergence between the new policy and the old policy, i.e., a trust region constraint\footnoteremovedforicml{For background on trust region methods in nonlinear optimization, see \cite{wright1999numerical}, chapter 4. A variety of nonlinear optimization algorithms work by solving a series of trust region subproblems, where a local approximation to the objective is optimized subject to a trust region constraint, which restricts the solution to the region where the approximation is valid.}:
\begin{align}
&\maximize_{\theta}  L_{\thold}(\theta) \label{eq:maxklconst} \\ &\text{\ \  subject to }  \maxkl(\thold,\theta) \le \delta. \nonumber
\end{align}
This problem imposes a constraint that the KL divergence is bounded at every point in the state space.
While it is motivated by the theory, this problem is impractical to solve due to the large number of constraints.
Instead, we can use a heuristic approximation which considers the average KL divergence:
\begin{align*}
\meankl{\rho}(\theta_1,\theta_2) \defeq \Eb{s \sim \rho}{\kl{\pi_{\theta_1}(\cdot \given  s)}{\pi_{\theta_2}(\cdot\given s)}}.
\end{align*}
We therefore propose solving the following optimization problem to generate a policy update:
\begin{align}
&\maximize_{\theta}  L_{\thold}(\theta) \label{eq:trprob}\\ &\text{\ \  subject to }  \meankl{\rho_{\thold}}(\thold,\theta) \le \delta. \nonumber
\end{align}
Similar policy updates have been proposed in prior work \cite{bagnell2003covariant,peters2008natural,pma-reps-10}, and we compare our approach to prior methods in Section~\ref{sec:related} and in the experiments in Section~\ref{sec:expt}. Our experiments also show that this type of constrained update has similar empirical performance to the maximum KL divergence constraint in Equation~(\ref{eq:maxklconst}).

\section{Sample-Based Estimation of the Objective and Constraint} \label{sec:samp}

The previous section proposed a constrained optimization problem on the policy parameters (Equation~\eqref{eq:trprob}),
which optimizes an estimate of the expected total reward $\eta$ subject to a constraint on the change in the policy at each update.
This section describes how the objective and constraint functions can be approximated using Monte Carlo simulation.

We seek to solve the following optimization problem, obtained by expanding
$L_{\thold}$ in Equation~\eqref{eq:trprob}:
\begin{align}
\maximize_{\theta}
\sum_s &\rho_{\thold}(s) \sum_a \pith(a\given s) \Apithold(s,a) \nonumber \\
&\text{\ \  subject to }
\meankl{\rho_{\thold}}(\thold,\theta) \le \delta.
\label{eq:adv0}
\end{align}

We first replace $\sum_s \rho_{\thold}(s) \lrbrack{\dots}$ in the objective by the expectation $\frac{1}{1-\gamma}\Eb{s \sim \rho_{\thold}}{\dots}$.
Next, we replace the advantage values $\Apithold$ by the $Q$-values $\Qpithold$ in Equation~\eqref{eq:adv0}, which only changes the objective by a constant.
Last, we replace the sum over the actions by an importance sampling estimator. Using $\isd$ to denote the sampling distribution, the contribution of a single $s_n$ to the loss function is
\begin{align}
\sum_a \pith(a \given  s_n) A_{\thold}(s_n,a)
=
\Eb{a \sim \isd}{\frac{\pith(a \given  s_n)}{\isd(a \given  s_n)}A_{\thold}(s_n,a) }.
\nonumber
\end{align}

Our optimization problem in \Cref{eq:adv0} is exactly equivalent to the following one, written in terms of expectations:
\begin{align}
&\maximize_{\theta} \Eb{s \sim \rho_{\thold}, a \sim \isd}{ \frac{\pith(a\given s)}{q(a\given s)} \Qpithold(s,a)}  \label{eq:trprobexp}\\
&\text{\ \  subject to }
\Eb{s \sim \rho_{\thold}}{\kl{\pi_{\thold}(\cdot \given  s)}{\pi_{\theta}(\cdot\given s)}}
\le \delta.  \nonumber
\end{align}

All that remains is to replace the expectations by sample averages and replace the $Q$ value by an empirical estimate.
The following sections describe two different schemes for performing this estimation.

The first sampling scheme, which we call \singlepath, is the one that is typically used for policy gradient estimation \cite{bartlett2001infinite}, and is based on sampling individual trajectories.
The second scheme, which we call \vine, involves constructing a rollout set and then performing multiple actions from each state in the rollout set.
This method has mostly been explored in the context of policy iteration methods \cite{lagoudakis2003reinforcement,ggs-adpfp-13}.

\subsection{Single Path}

In this estimation procedure, we collect a sequence of states by sampling $s_0 \sim \rho_0$ and then simulating the policy $\pi_{\thold}$ for some number of timesteps to generate a trajectory $s_0, a_0, s_1, a_1, \dots, s_{T-1},a_{T-1},s_T$.
Hence, $q(a \given s) = \pi_{\thold}(a \given s)$.
$Q_{\thold}(s,a)$ is computed at each state-action pair $(s_t,a_t)$ by taking the discounted sum of future rewards along the trajectory.

\subsection{Vine}

\begin{figure}
\centering
\includegraphics[height=2.5cm]{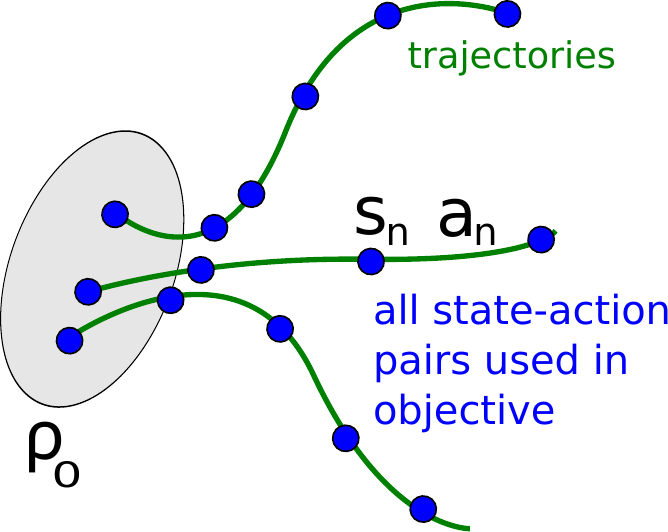}
\includegraphics[height=2.5cm]{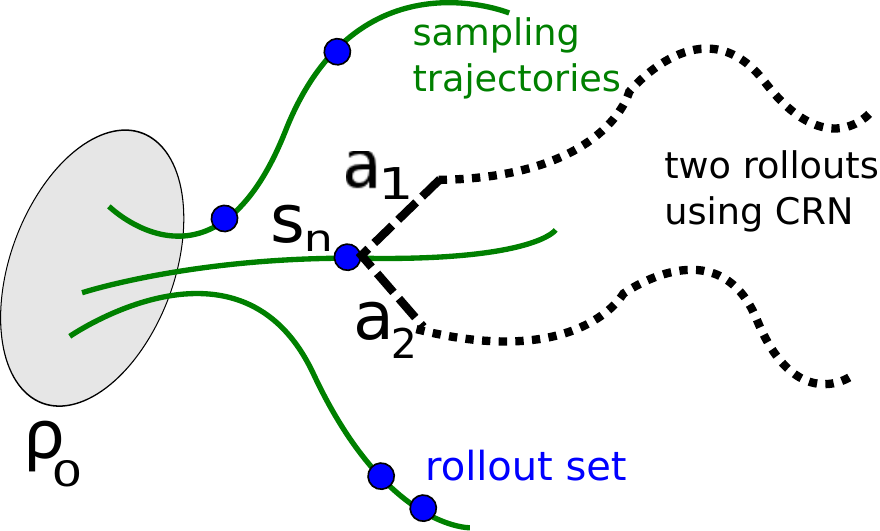}
\caption{Left: illustration of single path procedure. Here, we generate a set of trajectories via simulation of the policy and incorporate all state-action pairs ($s_n,a_n$) into the objective. Right: illustration of vine procedure. We generate a set of ``trunk'' trajectories, and then generate ``branch'' rollouts from a subset of the reached states. For each of these states $s_n$, we perform multiple actions ($a_1$ and $a_2$ here) and perform a rollout after each action, using common random numbers (CRN) to reduce the variance.}
\label{fig:vinesp}
\end{figure}

In this estimation procedure, we first sample $s_0 \sim \rho_0$ and simulate the policy $\pi_{\theta_i}$ to generate a number of trajectories.
We then choose a subset of $N$ states along these trajectories, denoted $s_1, s_2, \dots, s_N$, which we call the ``rollout set''.
For each state $s_n$ in the rollout set, we sample $K$ actions according to $a_{n,k} \sim \isd(\cdot \given  s_n)$. Any choice of $\isd(\cdot \given s_n)$ with a support that includes the support of $\pi_{\theta_i}(\cdot \given s_n)$ will produce a consistent estimator. In practice, we found that $\isd(\cdot \given s_n) = \pi_{\theta_i}(\cdot \given s_n)$ works well on continuous problems, such as robotic locomotion, while the uniform distribution works well on discrete tasks, such as the Atari games, where it can sometimes achieve better exploration.

For each action $a_{n,k}$ sampled at each state $s_n$, we estimate $\hat{Q}_{\theta_i}(s_n, a_{n,k})$ by performing a rollout (i.e., a short trajectory) starting with state $s_n$ and action $a_{n,k}$.
We can greatly reduce the variance of the $Q$-value differences between rollouts by using the same random number sequence for the noise in each of the $K$ rollouts, i.e., \textit{common random numbers}.
See \cite{bert} for additional discussion on Monte Carlo estimation of $Q$-values and \cite{ng2000pegasus} for a discussion of common random numbers in reinforcement learning.

In small, finite action spaces, we can generate a rollout for every possible action from a given state.
The contribution to $L_{\thold}$ from a single state $s_n$ is as follows:
\begin{align}
L_n(\theta) = \sum_{k=1}^K \pith(a_k \given s_n) \hat{Q}(s_n,a_k),
\end{align}
where the action space is $\cA = \lrbrace{a_1,a_2,\dots,a_K}$.
In large or continuous state spaces, we can construct an estimator of the surrogate objective using importance sampling.
The self-normalized estimator (\citet{mcbook}, Chapter 9) of $L_{\thold}$ obtained at a single state $s_n$ is
\begin{align}
L_n(\theta) =
\frac
{\sum_{k=1}^K \frac{\pith(a_{n,k} \given s_n)}{\pi_{\thold}(a_{n,k} \given s_n)} \hat{Q}(s_n,a_{n,k})}
{\sum_{k=1}^K \frac{\pith(a_{n,k} \given s_n)}{\pi_{\thold}(a_{n,k} \given s_n)} },
\end{align}
assuming that we performed $K$ actions $a_{n,1}, a_{n,2}, \dots, a_{n,K}$ from state $s_n$.
This self-normalized estimator removes the need to use a baseline for the $Q$-values (note that the gradient is unchanged by adding a constant to the $Q$-values).
Averaging over $s_n \sim \rho(\pi)$, we obtain an estimator for $L_{\thold}$, as well as its gradient.

The \textit{vine} and \textit{single path} methods are illustrated in \Cref{fig:vinesp}.
We use the term \textit{vine}, since the trajectories used for sampling can be likened to the stems of vines, which branch at various points (the rollout set) into several short offshoots (the rollout trajectories).

The benefit of the \vine{} method over the \singlepath{} method that is our local estimate of the objective has much lower variance given the same number of $Q$-value samples in the surrogate objective.
That is, the \vine{} method gives much better estimates of the advantage values.
The downside of the \vine{} method is that we must perform far more calls to the simulator for each of these advantage estimates.
Furthermore, the \vine{} method requires us to generate multiple trajectories from each state in the rollout set, which limits this algorithm to settings where the system can be reset to an arbitrary state. In contrast, the single path algorithm requires no state resets and can be directly implemented on a physical system \cite{peters2008natural}.

\section{Practical Algorithm} \label{sec:practical}

Here we present two practical policy optimization algorithm based on the ideas above, which use either the \singlepath{} or \vine{} sampling scheme from the preceding section.
The algorithms repeatedly perform the following steps:
\begin{enumerate}
\item Use the \singlepath{} or \vine{} procedures to collect a set of state-action pairs along with Monte Carlo estimates of their $Q$-values.
\item By averaging over samples, construct the estimated objective and constraint in \Cref{eq:trprobexp}.
\item Approximately solve this constrained optimization problem to update the policy's parameter vector $\theta$.
We use the conjugate gradient algorithm followed by a line search, which is altogether only slightly more expensive than computing the gradient itself. See \Cref{sec:cg} for details.
\end{enumerate}

With regard to (3), we construct the Fisher information matrix (FIM) by analytically computing the Hessian of the KL divergence, rather than using the covariance matrix of the gradients.
That is, we estimate $A_{ij}$ as \mbox{$\frac{1}{N}\sum_{n=1}^N \frac{\partial^2}{\partial\theta_i \partial \theta_j} \kl{\pi_{\thold}(\cdot \given s_n)}{\pi_{\theta}(\cdot \given s_n)}$}, rather than \mbox{$\frac{1}{N} \sum_{n=1}^N \frac{\partial}{\partial\theta_i}\log \pi_{\theta}(a_n \given s_n) \frac{\partial}{\partial\theta_j}\log \pi_{\theta}(a_n \given s_n) $}.
The analytic estimator integrates over the action at each state $s_n$, and does not depend on the action $a_n$ that was sampled.
As described in \Cref{sec:cg}, this analytic estimator has computational benefits in the large-scale setting, since it removes the need to store a dense Hessian or all policy gradients from a batch of trajectories. The rate of improvement in the policy is similar to the empirical FIM, as shown in the experiments.

Let us briefly summarize the relationship between the theory from \Cref{sec:improvetheory} and the practical algorithm we have described:
\begin{itemize}
\item The theory justifies optimizing a surrogate objective with a penalty on KL divergence. However, the large penalty coefficient $C$ leads to prohibitively small steps, so we would like to decrease this coefficient.
Empirically, it is hard to robustly choose the penalty coefficient, so we use a hard constraint instead of a penalty, with parameter $\delta$ (the bound on KL divergence).
\item The constraint on $\maxkl(\thold,\theta)$ is hard for numerical optimization and estimation, so instead we constrain $\meankl{}(\thold,\theta)$.
\item Our theory ignores estimation error for the advantage function.
\citet{kakade2002approximately} consider this error in their derivation, and the same arguments would hold in the setting of this paper, but we omit them for simplicity.
\end{itemize}

\section{Connections with Prior Work}
\label{sec:related}

As mentioned in Section~\ref{sec:parameterized}, our derivation results in a policy update that is related to several prior methods, providing a unifying perspective on a number of policy update schemes. The natural policy gradient \cite{kakade2002natural} can be obtained as a special case of the update in Equation~(\ref{eq:trprob}) by using a linear approximation to $L$ and a quadratic approximation to the $\meankl{}$ constraint, resulting in the following problem:
\begin{align}
&\maximize_{\theta} \lrbrack{ \gradth L_{\thold}(\theta)\evalat{\theta=\thold} \cdot (\theta - \thold) } \label{eq:natpolgrad} \\ &\text{\ \  subject to }  \half (\thold - \theta)^T A(\thold) (\thold - \theta) \le \delta,
\nonumber\\
& \text{\ \ where } A(\thold)_{ij} = \nonumber \\
& \quad \frac{\partial}{\partial\theta_i}\frac{\partial}{\partial\theta_j}   \Eb{s \sim \rhopi} { \kl{\pi(\cdot \given  s, \thold)}{\pi(\cdot \given  s, \theta)} }\evalat{\theta=\thold}.\nonumber
\end{align}
The update is \mbox{$\thnew = \thold + \frac{1}{\lambda} A(\thold)^{-1} \gradth L(\theta)\evalat{\theta=\thold}$}, where the stepsize $\frac{1}{\lambda}$ is typically treated as an algorithm parameter. This differs from our approach, which enforces the constraint at each update. Though this difference might seem subtle, our experiments demonstrate that it significantly improves the algorithm's performance on larger problems.

We can also obtain the standard policy gradient update by using an $\ell_2$ constraint or penalty:
\begin{align}
&\maximize_{\theta}  \lrbrack{\gradth L_{\thold}(\theta)\evalat{\theta=\thold} \cdot (\theta - \thold)} \label{eq:vanillapg} \\ &\text{\ \  subject to }  \half \norm{\theta - \thold}^2 \le \delta.  \nonumber
\end{align}
The policy iteration update can also be obtained by solving the unconstrained problem $\maximize_{\pi} L_{\piold}(\pi)$, using $L$ as defined in \Cref{eq:adv}.

Several other methods employ an update similar to Equation~(\ref{eq:trprob}). \textit{Relative entropy policy search} (REPS) \cite{pma-reps-10} constrains the state-action marginals $p(s, a)$, while TRPO constrains the conditionals $p(a \given s)$. Unlike REPS, our approach does not require a costly nonlinear optimization in the inner loop.
Levine and Abbeel \yrcite{levine2014learning} also use a KL divergence constraint, but its purpose is to encourage the policy not to stray from regions where the estimated dynamics model is valid, while we do not attempt to estimate the system dynamics explicitly.
\citet{pirotta2013safe} also build on and generalize Kakade and Langford's results, and they derive different algorithms from the ones here.

\textremovedforicml{
To review, a large portion of the pantheon of policy optimization algorithms can be understood as optimizing the surrogate objective function $L_{\thold}$, possibly subject to a constraint:
\begin{itemize}
\item The natural policy gradient is the step direction obtained by using a KL divergence constraint and shrinking the bound $\delta$ to zero, or equivalently making a quadratic approximation to the constraint function.
\item Conservative policy iteration uses a particular policy representation scheme to reduce the surrogate objective while obeying a constraint ($\maxtv < \delta$) on the policy.
\item The ``vanilla'' policy gradient is obtained by using a trust region (or penalty term) on the Euclidean distance $\norm{\theta-\thold}$.
\item The classic approximate policy iteration algorithm \cite{bertsekas2011approximate} is obtained by removing the constraint and fully minimizing $L_{\thold}$ in each batch.
\end{itemize}
}

\section{Experiments}\label{sec:expt}

We designed our experiments to investigate the following questions:
\begin{enumerate}
\vspace{-0.15in}
\item What are the performance characteristics of the \singlepath{} and \vine{} sampling procedures?
\item TRPO is related to prior methods (e.g. natural policy gradient) but makes several changes, most notably by using a fixed KL divergence rather than a fixed penalty coefficient. How does this affect the performance of the algorithm?
\item Can TRPO be used to solve challenging large-scale problems? How does TRPO compare with other methods when applied to large-scale problems, with regard to final performance, computation time, and sample complexity?
\vspace{-0.15in}
\end{enumerate}
To answer (1) and (2), we compare the performance of the \singlepath{} and \vine{} variants of TRPO, several ablated variants, and a number of prior policy optimization algorithms.
With regard to (3), we show that both the \singlepath{} and \vine{} algorithm can obtain high-quality locomotion controllers from scratch, which is considered to be a hard problem.
We also show that these algorithms produce competitive results when learning policies for playing Atari games from images using convolutional neural networks with tens of thousands of parameters.

\begin{figure}
\centering
\includegraphics[width=.4\textwidth]{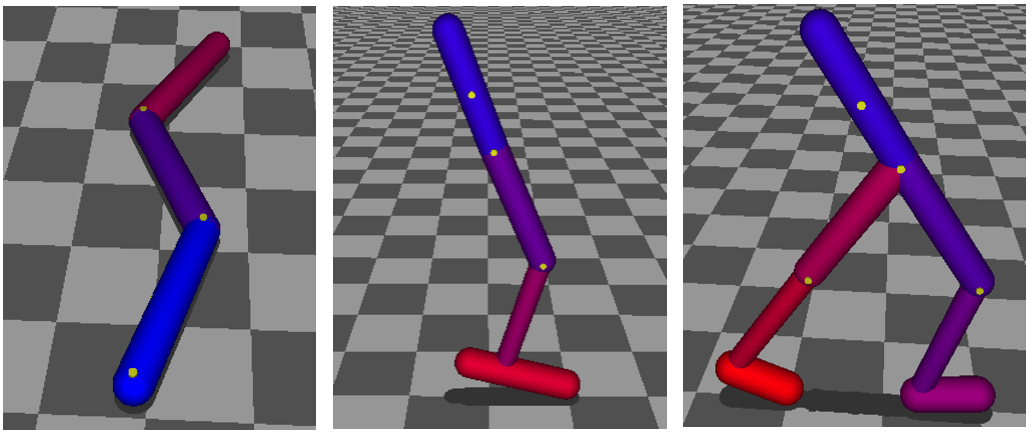}
\caption{2D robot models used for locomotion experiments. From left to right: swimmer, hopper, walker. The hopper and walker present a particular challenge, due to underactuation and contact discontinuities.}
\label{fig:mjcmodels}
\vspace{-0.15in}
\end{figure}

\subsection{Simulated Robotic Locomotion}

\begin{figure}
\centering
\includegraphics[height=3.5cm]{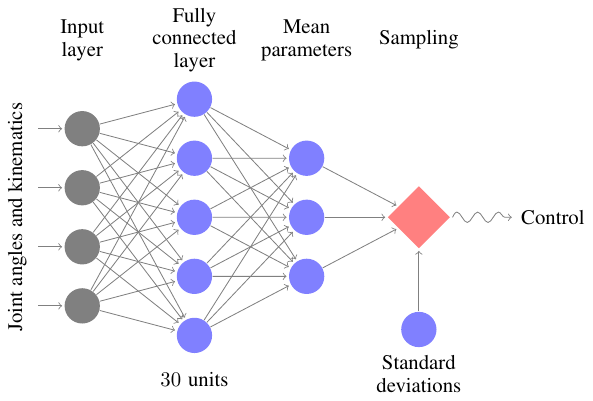}
\includegraphics[height=3.5cm]{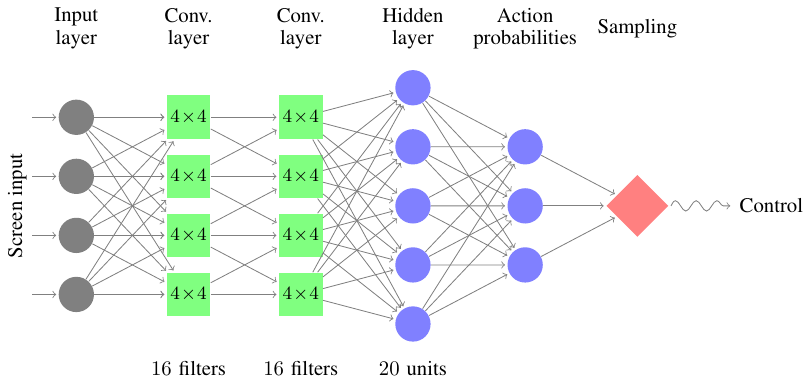}
\vspace{-0.1in}
\caption{Neural networks used for the locomotion task (top) and for playing Atari games (bottom).}
\label{fig:nn}
\vspace{-0.15in}
\end{figure}

We conducted the robotic locomotion experiments using the MuJoCo simulator \cite{todorov2012mujoco}.
The three simulated robots are shown in \Cref{fig:mjcmodels}.
The states of the robots are their generalized positions and velocities, and the controls are joint torques. Underactuation, high dimensionality, and non-smooth dynamics due to contacts make these tasks very challenging.
The following models are included in our evaluation:
\begin{enumerate}
\vspace{-0.1in}
\item \textit{Swimmer}. $10$-dimensional state space, linear reward for forward progress and a quadratic penalty on joint effort to produce the reward $r(x,u) = v_x - 10^{-5}\norm{u}^2$. The swimmer can propel itself forward by making an undulating motion.
\item \textit{Hopper}. $12$-dimensional state space, same reward as the swimmer, with a bonus of $+1$ for being in a non-terminal state.
We ended the episodes when the hopper fell over, which was defined by thresholds on the torso height and angle.
\item \textit{Walker}. $18$-dimensional state space. For the walker, we added a penalty for strong impacts of the feet against the ground to encourage a smooth walk rather than a hopping gait.
\vspace{-0.1in}
\end{enumerate}

We used $\delta=0.01$ for all experiments.
See \Cref{table:contctrlparams} in the Appendix for more details on the experimental setup and parameters used.
We used neural networks to represent the policy, with the architecture shown in \Cref{fig:nn}, and further details provided in \Cref{sec:nn}. To establish a standard baseline, we also included the classic cart-pole balancing problem, based on the formulation from \citet{barto1983neuronlike}, using a linear policy with six parameters that is easy to optimize with derivative-free black-box optimization methods.

The following algorithms were considered in the comparison:
\textit{single path TRPO};
\textit{vine TRPO};
\textit{cross-entropy method} (CEM), a gradient-free method \cite{szita2006learning};
\textit{covariance matrix adaption} (CMA), another gradient-free method \cite{hansen1996adapting};
\textit{natural gradient}, the classic natural policy gradient algorithm \cite{kakade2002natural}, which differs from \textit{single path} by the use of a fixed penalty coefficient (Lagrange multiplier) instead of the KL divergence constraint;
\textit{empirical FIM}, identical to \textit{single path}, except that the FIM is estimated using the covariance matrix of the gradients rather than the analytic estimate;
\textit{max KL}, which was only tractable on the cart-pole problem, and uses the maximum KL divergence in Equation~(\ref{eq:maxklconst}), rather than the average divergence, allowing us to evaluate the quality of this approximation.
The parameters used in the experiments are provided in \Cref{sec:exptparams}.
For the \textit{natural gradient} method, we swept through the possible values of the stepsize in factors of three, and took the best value according to the final performance.

\begin{figure}
\centering
{\includegraphics[width=0.23\textwidth]{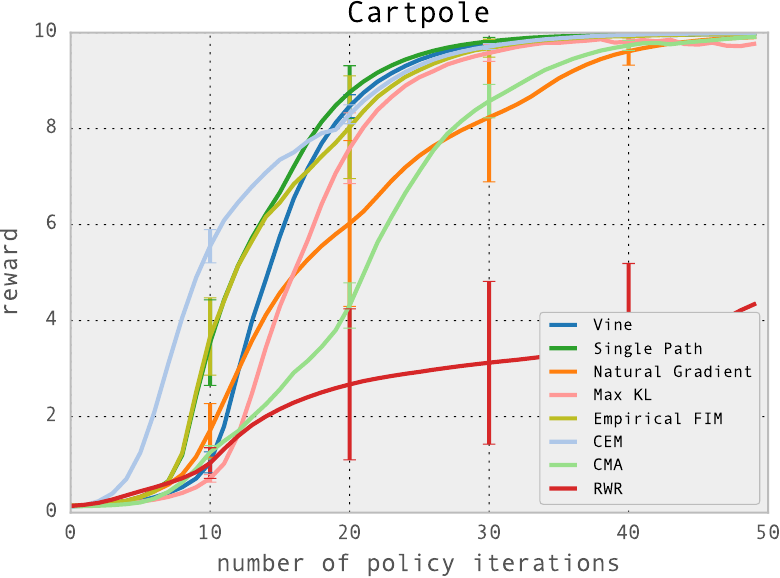}}
{\includegraphics[width=0.23\textwidth]{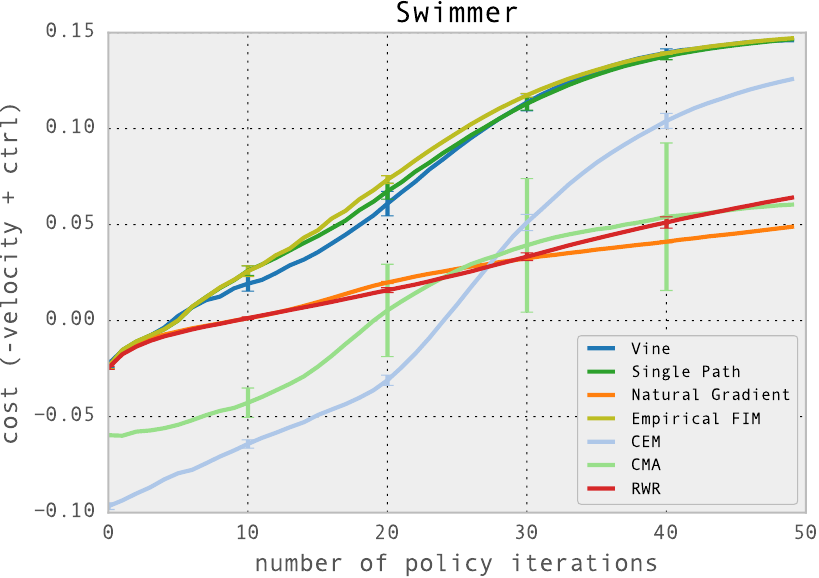}}
\\
{\includegraphics[width=0.23\textwidth]{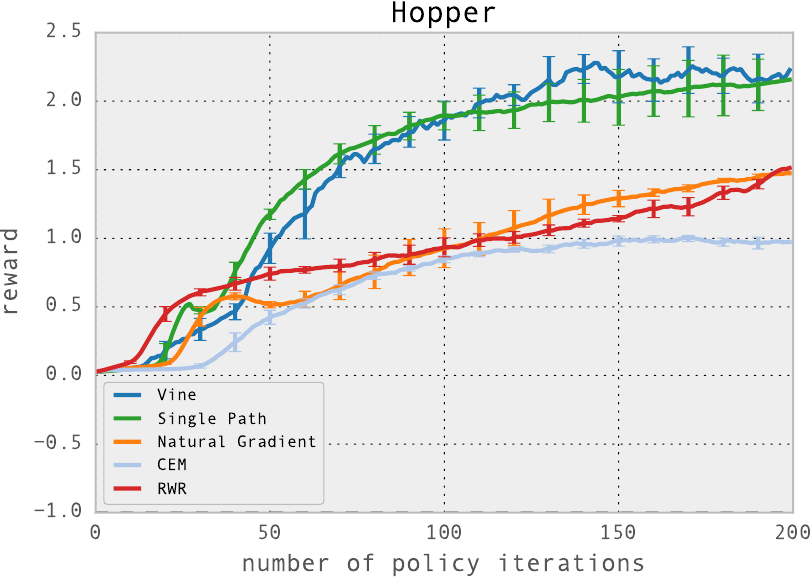}}
{\includegraphics[width=0.23\textwidth]{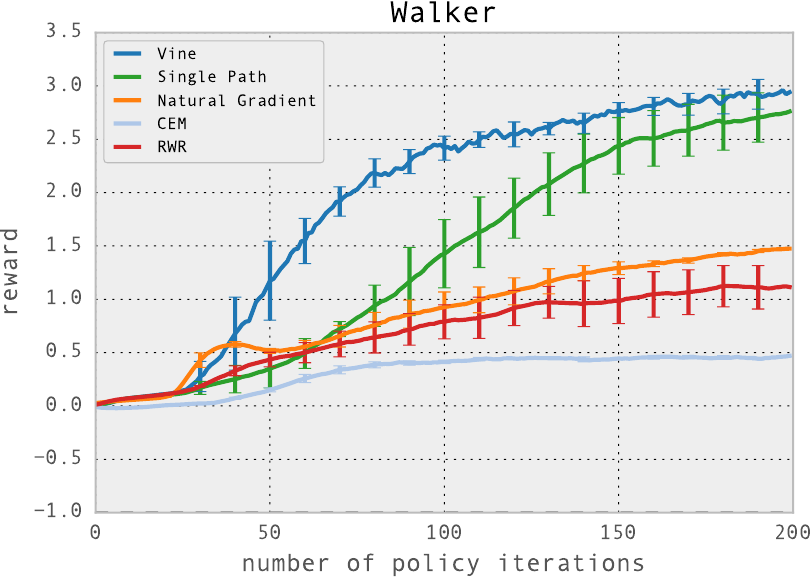}}
\caption{Learning curves for locomotion tasks, averaged across five runs of each algorithm with random initializations. Note that for the hopper and walker, a score of $-1$ is achievable without any forward velocity, indicating a policy that simply learned balanced standing, but not walking.
}
\label{fig:locomotion}
\vspace{-0.15in}
\end{figure}

Learning curves showing the total reward averaged across five runs of each algorithm are shown in \Cref{fig:locomotion}.
\textit{Single path} and \textit{vine} TRPO solved all of the problems, yielding the best solutions.
\textit{Natural gradient} performed well on the two easier problems, but was unable to generate hopping and walking gaits that made forward progress.
These results provide empirical evidence that constraining the KL divergence is a more robust way to choose step sizes and make fast, consistent progress, compared to using a fixed penalty.
CEM and CMA are derivative-free algorithms, hence their sample complexity scales unfavorably with the number of parameters, and they performed poorly on the larger problems.
The \textit{max KL} method learned somewhat more slowly than our final method, due to the more restrictive form of the constraint, but overall the result suggests that the average KL divergence constraint has a similar effect as the theorecally justified maximum KL divergence. Videos of the policies learned by TRPO may be viewed on the project website: \url{http://sites.google.com/site/trpopaper/}.

Note that TRPO learned all of the gaits with general-purpose policies and simple reward functions, using minimal prior knowledge. This is in contrast with most prior methods for learning locomotion, which typically rely on hand-architected policy classes that explicitly encode notions of balance and stepping \cite{tzs-spgrl-04,gpw-fbwrc-06,wampler2009optimal}.

\begin{table*}
\tiny
\centering
\begin{tabular}{lrrrrrrr}
\toprule
& {\it B. Rider} & {\it Breakout} & {\it Enduro} & {\it Pong} & {\it Q*bert} & {\it Seaquest} & {\it S. Invaders}\\
\midrule
Random      & 354  & 1.2   & 0   & $-20.4$  & 157  & 110  & 179 \\
Human \cite{mnih2013playing}       & 7456 & 31.0  & 368 & $-3.0$   & 18900 & 28010 & 3690    \\
\midrule
Deep Q Learning \cite{mnih2013playing}   & 4092 & 168.0 & 470 & 20.0     & 1952 & 1705 & 581     \\
\midrule
UCC-I \cite{guo2014deep}  & 5702 & 380 & 741 & 21 & 20025 & 2995 & 692 \\
\midrule
TRPO - single path & 1425.2 &  10.8 &  534.6 &  20.9 &  1973.5 &  1908.6 &  568.4 \\
TRPO - vine    & 859.5 &  34.2 &  430.8 &  20.9 &  7732.5 &  788.4 &  450.2 \\
\bottomrule
\end{tabular}
\caption{Performance comparison for vision-based RL algorithms on the \Atari{} domain.
Our algorithms (bottom rows) were run once on each task, with the same architecture and parameters.
Performance varies substantially from run to run (with different random initializations of the policy), but we could not obtain error statistics due to time constraints.
\label{table:atariresults}}
\end{table*}

\subsection{Playing Games from Images}

To evaluate TRPO on a partially observed task with complex observations, we trained policies for playing  \Atari{} games, using raw images as input. The games require learning a variety of behaviors, such as dodging bullets and hitting balls with paddles.
Aside from the high dimensionality, challenging elements of these games include delayed rewards (no immediate penalty is incurred when a life is lost in Breakout or Space Invaders); complex sequences of behavior (Q*bert requires a character to hop on $21$ different platforms); and non-stationary image statistics (Enduro involves a changing and flickering background).

We tested our algorithms on the same seven games reported on in \cite{mnih2013playing} and \cite{guo2014deep}, which are made available through the Arcade Learning Environment \cite{bellemare13arcade}
The images were preprocessed following the protocol in Mnih et al \yrcite{mnih2013playing}, and
the policy was represented by the convolutional neural network shown in \Cref{fig:nn}, with two convolutional layers with $16$ channels and stride $2$, followed by one fully-connected layer with $20$ units, yielding 33,500 parameters.

The results of the \vine{} and \singlepath{} algorithms are summarized in \Cref{table:atariresults}, which also includes an expert human performance and two recent methods: deep $Q$-learning \cite{mnih2013playing}, and a combination of Monte-Carlo Tree Search with supervised training \cite{guo2014deep}, called UCC-I. The 500 iterations of our algorithm took about 30 hours (with slight variation between games) on a 16-core computer. While our method only outperformed the prior methods on some of the games, it consistently achieved reasonable scores. Unlike the prior methods, our approach was not designed specifically for this task. The ability to apply the same policy search method to methods as diverse as robotic locomotion and image-based game playing demonstrates the generality of TRPO.

\section{Discussion}

We proposed and analyzed trust region methods for optimizing stochastic control policies.
We proved monotonic improvement for an algorithm that repeatedly optimizes a local approximation to the expected return of the policy with a KL divergence penalty, and we showed that an approximation to this method that incorporates a KL divergence constraint achieves good empirical results on a range of challenging policy learning tasks, outperforming prior methods. Our analysis also provides a perspective that unifies policy gradient and policy iteration methods, and shows them to be special limiting cases of an algorithm that optimizes a certain objective subject to a trust region constraint.

In the domain of robotic locomotion, we successfully learned controllers for swimming, walking and hopping in a physics simulator, using general purpose neural networks and minimally informative rewards.
To our knowledge, no prior work has learned controllers from scratch for all of these tasks, using a generic policy search method and non-engineered, general-purpose policy representations.
In the game-playing domain, we learned convolutional neural network policies that used raw images as inputs. This requires optimizing extremely high-dimensional policies, and only two prior methods report successful results on this task.

Since the method we proposed is scalable and has strong theoretical foundations, we hope that it will serve as a jumping-off point for future work on training large, rich function approximators for a range of challenging problems.
At the intersection of the two experimental domains we explored, there is the possibility of learning robotic control policies that use vision and raw sensory data as input, providing a unified scheme for training robotic controllers that perform both perception and control.
The use of more sophisticated policies, including recurrent policies with hidden state, could further make it possible to roll state estimation and control into the same policy in the partially-observed setting.
By combining our method with model learning, it would also be possible to substantially reduce its sample complexity, making it applicable to real-world settings where samples are expensive.

\section*{Acknowledgements}
We thank Emo Todorov and Yuval Tassa for providing the MuJoCo simulator; Bruno Scherrer, Tom Erez, Greg Wayne, and the anonymous ICML reviewers for insightful comments, and Vitchyr Pong and Shane Gu for pointing our errors in a previous version of the manuscript. This research was funded in part by the Office of Naval Research through a Young Investigator Award and under grant number N00014-11-1-0688,
DARPA through a Young Faculty Award, by the Army Research
Office through the MAST program.

{
\bibliographystyle{icml2015}
\small
\setlength{\bibsep}{6pt}
\bibliography{policyopt}
}

\appendix{
\onecolumn

\section{Proof of Policy Improvement Bound} \label{sec:couplingproof}

This proof (of \Cref{thm:impthm}) uses techniques from the proof of Theorem 4.1 in \cite{kakade2002approximately}, adapting them to the more general setting considered in this paper.
An informal overview is as follows.
Our proof relies on the notion of coupling, where we jointly define the policies $\pi$ and $\pi'$ so that they choose the same action with high probability $=(1-\alpha)$.
Surrogate loss $L_{\pi}(\tilpi)$ accounts for the the advantage of $\tilpi$ the first time that it disagrees with $\pi$, but not subsequent disagreements. Hence, the error in $L_{\pi}$ is due to two or more disagreements between $\pi$ and $\tilpi$, hence, we get an $O(\alpha^2)$ correction term, where $\alpha$ is the probability of disagreement.

We start out with a lemma from \citet{kakade2002approximately} that shows that the difference in policy performance $\eta(\tilpi) - \eta(\pi)$ can be decomposed as a sum of per-timestep advantages.
\begin{lemma} \label{improvelemma} Given two policies $\pi,\tilpi$,
\begin{align}
\eta(\tilpi) = \eta(\pi) +  &\Eb{\tau \sim \tilpi}{\sum_{t=0}^{\infty} \gamma^t A_{\pi}(s_t, a_t)} \nonumber \\
\end{align}
This expectation is taken over trajectories $\tau \defeq (s_0, a_0, s_1, a_0, \dots)$, and the notation $\Eb{\tau \sim \tilpi}{\dots}$ indicates that actions are sampled from $\tilpi$ to generate $\tau$.
\end{lemma}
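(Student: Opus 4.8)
The plan is to prove the advantage-accumulation identity
\[
\eta(\tilpi) = \eta(\pi) + \Eb{\tau \sim \tilpi}{\sum_{t=0}^{\infty} \gamma^t A_{\pi}(s_t, a_t)}
\]
by expanding the definition of the advantage function and exploiting the telescoping structure that appears when we take the expectation over a trajectory generated by $\tilpi$. The key observation is that $A_{\pi}(s_t,a_t) = \Eb{s_{t+1}}{r(s_t) + \gamma \Vpi(s_{t+1}) - \Vpi(s_t)}$, which follows from the Bellman-type relation $\Qpi(s,a) = \Ea{r(s) + \gamma \Vpi(s')}$ together with $A_\pi = \Qpi - \Vpi$.

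\textbf{Step 1.} First I would substitute this expression for $A_\pi$ into the right-hand side, so that the inner summand becomes $\gamma^t\lrparen{r(s_t) + \gamma \Vpi(s_{t+1}) - \Vpi(s_t)}$, with everything sampled along $\tau \sim \tilpi$. The crucial point is that the value-function terms telescope: when we sum over $t$, each $\gamma^{t+1}\Vpi(s_{t+1})$ from the $t$-th term cancels the $-\gamma^{t+1}\Vpi(s_{t+1})$ coming from the $(t{+}1)$-th term.

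\textbf{Step 2.} Carrying out the telescoping sum leaves only the $t=0$ boundary term $-\Vpi(s_0)$ plus the accumulated reward $\sum_{t=0}^\infty \gamma^t r(s_t)$. Taking the expectation over $\tau \sim \tilpi$, the reward sum is exactly $\eta(\tilpi)$ by definition, while $\Eb{s_0 \sim \rho_0}{\Vpi(s_0)}$ is exactly $\eta(\pi)$, since the value function averaged over the initial-state distribution recovers the expected discounted return. Thus the right-hand side collapses to $\eta(\pi) + \lrparen{\eta(\tilpi) - \eta(\pi)} = \eta(\tilpi)$, as claimed.

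\textbf{The main obstacle} I anticipate is the careful handling of the interchange of expectation and the infinite sum, and keeping track of which distribution each term is sampled from: the advantage $A_\pi$ is defined in terms of $\pi$, but the states and actions $(s_t,a_t)$ along $\tau$ are generated by $\tilpi$, so the Bellman relation for $A_\pi$ must be applied conditionally on $(s_t,a_t)$ before the outer expectation over the $\tilpi$-trajectory is taken. Justifying the telescoping requires that the tail term $\gamma^{t+1}\Eb{\tau\sim\tilpi}{\Vpi(s_{t+1})}$ vanishes as $t \to \infty$, which follows from $\gamma \in (0,1)$ and the boundedness of rewards (hence of $\Vpi$). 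Once the interchange and the vanishing-tail condition are established, the remaining algebra is routine.
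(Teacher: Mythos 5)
Your proposal is correct and follows essentially the same route as the paper's own proof: substituting the Bellman form $A_{\pi}(s,a) = \Eb{s'}{r(s) + \gamma \Vpi(s') - \Vpi(s)}$, telescoping the value-function terms, and identifying $\Eb{s_0}{\Vpi(s_0)} = \eta(\pi)$ and the discounted reward sum with $\eta(\tilpi)$. Your added care about interchanging the expectation with the infinite sum and the vanishing tail is a point the paper passes over silently, but it does not change the argument.
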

\begin{proof}
First note that $A_{\pi}(s,a) = \Eb{s' \sim P(s' \given s,a)}{r(s) + \gamma \Vpi(s') - \Vpi(s)}$.
Therefore,
\begin{align}
&\Eb{\tau \given \tilpi}{\sum_{t=0}^{\infty} \gamma^t A_{\pi}(s_t, a_t)}\\
&= \Eb{\tau \given \tilpi}{\sum_{t=0}^{\infty} \gamma^t (r(s_t) + \gamma \Vpi(s_{t+1}) - \Vpi(s_t))}\\
&= \Eb{\tau \given \tilpi}{-\Vpi(s_0)+ \sum_{t=0}^{\infty} \gamma^t r(s_t)}  \\
&= -\Eb{s_0}{\Vpi(s_0)}+\Eb{\tau \given \tilpi}{\sum_{t=0}^{\infty} \gamma^t r(s_t)}  \label{eq:telescope}\\
&=  -\eta(\pi) + \eta(\tilpi)
\end{align}
Rearranging, the result follows.
\end{proof}

\newcommand{\meanadv}{\bar{A}}
Define $\meanadv(s)$ to be the expected advantage of $\tilpi$ over $\pi$ at state $s$:
\begin{align}
\meanadv(s) = \Eb{a \sim \tilpi(\cdot \given s)}{\Api(s, a)}.
\end{align}

Now \Cref{improvelemma} can be written as follows:
\begin{align}
\eta(\tilpi) = \eta(\pi) + \Eb{\tau \sim \tilpi}{\sum_{t=0}^{\infty} \gamma^t \meanadv(s_t)} \label{etadef}
\end{align}
Note that $L_{\pi}$ can be written as
\begin{align}
L_{\pi}(\tilpi) = \eta(\pi) + \Eb{\tau \sim \pi}{\sum_{t=0}^{\infty} \gamma^t \meanadv(s_t)} \label{tilpidef}
\end{align}
The difference in these equations is whether the states are sampled using $\pi$ or $\tilpi$.
To bound the difference between $\eta(\tilpi)$ and $L_{\pi}(\tilpi)$, we will bound the difference arising from each timestep.
To do this, we first need to introduce a measure of how much $\pi$ and $\tilpi$ agree.
Specifically, we'll \textit{couple} the policies, so that they define a joint distribution over pairs of actions.
\begin{defn}
$(\pi,\tilpi)$ is an \textit{$\alpha$-coupled policy pair} if it defines a joint distribution $(a,\tilde{a}) \given s$, such that $P(a \neq \tilde{a} \given s) \le \alpha$ for all $s$.
$\pi$ and $\tilpi$ will denote the marginal distributions of $a$ and $\tilde{a}$, respectively.
\end{defn}
Computationally, $\alpha$-coupling means that if we randomly choose a seed for our random number generator, and then  we sample from each of $\pi$ and $\tilpi$ after setting that seed, the results will agree for at least fraction $1 - \alpha$ of seeds.

\begin{lemma}
Given that $\pi, \tilpi$ are $\alpha$-coupled policies, for all $s$,
\begin{align}
\abs*{\meanadv(s)} \le 2 \alpha \maxadv
\end{align}
\end{lemma}
\begin{proof}
\begin{align}
\meanadv(s) &= \Eb{\tilde{a} \sim \tilpi}{\Api(s,\tilde{a})} =\Eb{(a,\tilde{a}) \sim (\pi,\tilpi)}{\Api(s,\tilde{a}) - \Api(s, a)} \quad\text{since \quad$\Eb{a \sim \pi}{\Api(s,a)}=0$}\\
&= P(a \neq \tilde{a} \given s) \Eb{(a,\tilde{a}) \sim (\pi,\tilpi) \given a \neq \tilde{a}}{\Api(s,\tilde{a}) - \Api(s, a)}\\
\abs{\meanadv(s)} &\le \alpha \cdot 2 \maxadv
\end{align}
\end{proof}

\begin{lemma}
\label{advbound}
Let $(\pi,\tilpi)$ be an $\alpha$-coupled policy pair. Then
\begin{align}
\abs*{\Eb{s_t \sim \tilpi}{\meanadv(s_t)} - \Eb{s_t \sim \pi}{\meanadv(s_t)}} &\le 2 \alpha \max_s \meanadv(s)\le 4 \alpha (1 - (1 - \alpha)^t)\max_s \abs{\Api(s, a)}
\end{align}
\end{lemma}
\begin{proof}

\newcommand{\st}{s_t}
\newcommand{\nt}{n_t}
\DeclarePairedDelimiter{\lrparena}{(}{)}

Given the coupled policy pair $(\pi, \tilpi)$, we can also obtain a coupling over the trajectory distributions produced by $\pi$ and $\tilpi$, respectively.
Namely, we have pairs of trajectories $\tau, \tilde{\tau}$, where $\tau$ is obtained by taking actions from $\pi$, and $\tilde{\tau}$ is obtained by taking actions from $\tilpi$, where the same random seed is used to generate both trajectories.
We will consider the advantage of $\tilpi$ over $\pi$ at timestep $t$, and decompose this expectation based on whether $\pi$ agrees with $\tilpi$ at all timesteps $i<t$.

Let $n_t$ denote the number of times that $a_i \neq \tilde{a}_i$ for $i < t$, i.e., the number of times that $\pi$ and $\tilpi$ disagree before timestep $t$.
\begin{align}
\Eb{s_t \sim \tilpi}{\meanadv(s_t)}
= P(n_t=0)\Eb{s_t \sim \tilpi \given n_t = 0}{\meanadv(s_t)}
+ P(n_t>0)\Eb{s_t \sim \tilpi \given n_t > 0}{\meanadv(s_t)}
\label{decomp-pitil}
\end{align}
The expectation decomposes similarly for actions are sampled using $\pi$:
\begin{align}
\Eb{\st \sim \pi}{\meanadv(\st)}
=
P(n_t = 0)
\Eb{\st \sim \pi | \nt = 0}{\meanadv(\st)}
+
P(n_t > 0)
\Eb{\st \sim \pi | \nt > 0}{\meanadv(\st)} \label{decomp-pi}
\end{align}
Note that the $n_t=0$ terms are equal:
\begin{align}
\Eb{s_t \sim {\color{red} \tilpi} \given n_t = 0}{\meanadv(s_t)}=\Eb{s_t \sim {\color{red} \pi} \given n_t = 0}{\meanadv(s_t)},
\end{align}
because $n_t=0$ indicates that $\pi$ and $\tilpi$ agreed on all timesteps less than $t$.
Subtracting \Cref{decomp-pitil,decomp-pi}, we get
\begin{align}
\Eb{\st \sim \tilpi}{\meanadv(s_t)} - \Eb{\st \sim \pi}{\meanadv(s_t)}
&=
P(n_t > 0)
\lrparena*{
\Eb{\st \sim \tilpi | \nt > 0}{\meanadv(s_t)}
- \Eb{\st \sim \pi | \nt > 0}{\meanadv(s_t)}
}
\label{diff-ineq}
\end{align}
By definition of $\alpha$, $P(\pi,\tilpi \text{ agree at timestep $i$}) \ge 1-\alpha$, so $P(n_t = 0) \ge (1-\alpha)^t$, and
\begin{align}
P(n_t > 0) \le 1 - (1 - \alpha)^t  \label{nineq}
\end{align}
Next, note that
\begin{align}
\abs*{\Eb{\st \sim \tilpi | \nt > 0}{\meanadv(s_t)} - \Eb{\st \sim \pi | \nt > 0}{\meanadv(s_t)}}
&\le \abs*{\Eb{\st \sim \tilpi | \nt > 0}{\meanadv(s_t)}} +\abs*{\Eb{\st \sim \pi | \nt > 0}{\meanadv(s_t)}}\\
&\le 4\alpha \maxadv \label{epsineq}
\end{align}
Where the second inequality follows from \Cref{advbound}.

Plugging \Cref{nineq} and \Cref{epsineq} into \Cref{diff-ineq}, we get
\begin{align}
\abs*{\Eb{\st \sim \tilpi}{\meanadv(s_t)}
-
\Eb{\st \sim \pi}{\meanadv(s_t)}} \le 4 \alpha (1 - (1 - \alpha)^t) \maxadv
\end{align}
\end{proof}

The preceding Lemma bounds the difference in expected advantage at each timestep $t$. We can sum over time to bound the difference between $\eta(\tilpi)$ and $L_{\pi}(\tilpi)$.
Subtracting \Cref{etadef} and \Cref{tilpidef}, and defining $\epsilon=\maxadv$,
\begin{align}
\abs*{\eta(\tilpi) - L_{\pi}(\tilpi)}
&=\sum_{t=0}^{\infty} \gamma^t \abs*{
\Eb{\tau \sim \tilpi}{\meanadv(s_t)}-\Eb{\tau \sim \pi}{\meanadv(s_t)}
}\\
&\le \sum_{t=0}^{\infty} \gamma^t \cdot 4\epsilon \alpha (1 - (1 - \alpha)^t)\\
&= 4\epsilon \alpha \lrparen{\frac{1}{1 - \gamma} - \frac{1}{1 - \gamma(1- \alpha)}}\\
&= \frac{4\alpha^2 \gamma \epsilon}{(1 - \gamma)(1 - \gamma(1- \alpha ))}\\
&\le \frac{4\alpha^2 \gamma \epsilon}{(1 - \gamma)^2}
\label{fracsineq}
\end{align}

Last, to replace $\alpha$ by the total variation divergence, we need to use the correspondence between TV divergence and coupled random variables:
\begin{quote}
Suppose $p_X$ and $p_Y$ are distributions with $\tv{p_X}{p_Y} = \alpha$.
Then there exists a joint distribution $(X,Y)$ whose marginals are $p_X, p_Y$, for which $X = Y$ with probability $1-\alpha$.
\end{quote}
See \cite{levin2009markov}, Proposition 4.7.

It follows that if we have two policies $\pi$ and $\tilpi$ such that $\max_s \tv{\pi(\cdot \given  s)}{\tilpi(\cdot \given  s) } \le \alpha$, then we can define an $\alpha$-coupled policy pair $(\pi,\tilpi)$ with appropriate marginals.
Taking $\alpha=\max_s \tv{\pi(\cdot \given  s)}{\tilpi(\cdot \given  s) } \le \alpha$ in \Cref{fracsineq},
\Cref{thm:impthm} follows.

\section{Perturbation Theory Proof of Policy Improvement Bound}
\label{sec:pert}
We also provide an alternative proof of \Cref{thm:impthm} using perturbation theory.
\begin{proof}
Let $G = (1 + \gamma \Ppi + (\gamma \Ppi)^2 + \dots) = (1-\gamma \Ppi)^{-1}$, and similarly
Let $\tilG = (1 + \gamma \Ppitil + (\gamma \Ppitil)^2 + \dots) = (1-\gamma \Ppitil)^{-1}$.
We will use the convention that $\rho$ (a density on state space) is a vector and $r$ (a reward function on state space) is a dual vector (i.e., linear functional on vectors), thus $r \rho$ is a scalar meaning the expected reward under density $\rho$.
Note that $\eta(\pi) = rG\rho_0$, and $\eta(\tilpi) = c \tilG \rho_0$.
Let $\dP = \Ppitil - \Ppi$.
We want to bound $\eta(\tilpi) - \eta(\pi) = r(\tilG - G)\rho_0$.
We start with some standard perturbation theory manipulations.
\begin{align}
G^{-1} - \tilG^{-1}
&= (1-\gamma \Ppi) - (1-\gamma \Ppitil) \nonumber \\
&= \gamma \dP.
\end{align}
Left multiply by $G$ and right multiply by $\tilG$.
\begin{align}
\tilG - G &= \gamma G \dP\tilG \nonumber \\
\tilG &= G + \gamma G \dP\tilG
\end{align}
Substituting the right-hand side into $\tilG$ gives
\begin{align}
\tilG   &= G + \gamma G \dP G + \gamma^2 G \dP G \dP \tilG
\end{align}
So we have
\begin{align}
\eta(\tilpi) - \eta(\pi) = r(\tilG - G)\rho = \gamma r G \dP G \rho_0 + \gamma^2 r G \dP G \dP \tilG \rho_0
\end{align}

Let us first consider the leading term $\gamma r G \dP G \rho_0$.
Note that $rG=v$, i.e., the infinite-horizon state-value function.
Also note that $G\rho_0 = \rho_{\pi}$.
Thus we can write $\gamma c G \dP G \rho_0 = \gamma v \dP \rho_{\pi}$.
We will show that this expression equals the expected advantage $L_{\pi}(\tilpi) - L_{\pi}(\pi)$.
\begin{align}
L_{\pi}(\tilpi) - L_{\pi}(\pi)
&= \sum_{s} \rhopi(s) \sum_a (\tilpi(a \given  s) - \pi(a \given  s))\Api(s,a)  \nonumber \\
&= \sum_{s} \rhopi(s) \sum_a \lrparen{\pi_{\theta}(a\given s) - \pi_{\tilth}(a \given  s)} \lrbrack{ r(s) + \sum_{s'} p(s'\given s,a) \gamma v(s') - v(s)} \nonumber \\
&= \sum_{s} \rhopi(s)  \sum_{s'} \sum_a \lrparen{\pi(a\given s) - \tilpi(a \given  s)}  p(s'\given s,a) \gamma v(s')  \nonumber \\
&= \sum_{s} \rhopi(s)  \sum_{s'} (p_{\pi}(s'\given s)-p_{\tilpi}(s'\given s)) \gamma v(s')  \nonumber \\
&= \gamma v \dP \rhopi
\end{align}

Next let us bound the $O(\dP^2)$ term $\gamma^2 r G \Delta G \Delta \tilG \rho$.
First we consider the product $ \gamma r G \Delta = \gamma v \Delta$.
Consider the component $s$ of this dual vector.
\begin{align}
\abs{(\gamma v \Delta)_s}
&= \abs*{\sum_a (\tilpi(s,a) - \pi(s,a)) \Qpi(s,a)} \nonumber \\
&= \abs*{\sum_a (\tilpi(s,a) - \pi(s,a)) \Api(s,a)} \nonumber \\
&\le \sum_a \abs*{\tilpi(s,a) - \pi(s,a)} \cdot \max_a \abs{\Api(s,a)} \nonumber \\
&\le 2 \alpha \epsilon
\end{align}
where the last line used the definition of the total-variation divergence, and the definition of $\epsilon=\maxadv$. We bound the other portion $G \Delta \tilG \rho$ using the $\ell_1$ operator norm
\begin{align}
\normone{A} = \sup_{\rho} \lrbrace{ \frac{\normone{A\rho}}{\normone{\rho}} }
\end{align}
where we have that $\normone{G} = \normone{\tilG} = 1/(1-\gamma)$ and $\normone{\Delta}=2\alpha$.
That gives
\begin{align}
\norm{G \Delta \tilG \rho}_1
&\le \normone{G} \normone{\Delta} \normone{\tilG} \normone{\rho} \nonumber \\
&= \frac{1}{1-\gamma} \cdot 2\alpha \cdot \frac{1}{1-\gamma} \cdot 1
\end{align}
So we have that
\begin{align}
\gamma^2 \abs*{r G \Delta G \Delta \tilG \rho}
&\le \gamma \norm{\gamma r G \Delta}_{\infty} \normone{G \Delta \tilG \rho} \nonumber \\
&\le \gamma \norm{v \Delta}_{\infty} \normone{G \Delta \tilG \rho} \nonumber \\
&\le \gamma \cdot 2\alpha \epsilon \cdot \frac{2\alpha}{(1-\gamma)^2} \nonumber \\
&= \frac{4\gamma \epsilon  }{(1-\gamma)^2}\alpha^2
\end{align}

\end{proof}

\section{Efficiently Solving the Trust-Region Constrained Optimization Problem} \label{sec:cg}

This section describes how to efficiently approximately solve the following constrained optimization problem, which we must solve at each iteration of TRPO:
\begin{align}
\maximize L(\theta) \text{\ \  subject to }  \meankl{}(\thold,\theta) \le \delta.
\end{align}
The method we will describe involves two steps: (1) compute a search direction, using a linear approximation to objective and quadratic approximation to the constraint; and (2) perform a line search in that direction, ensuring that we improve the nonlinear objective while satisfying the nonlinear constraint.

The search direction is computed by approximately solving the equation $Ax=g$, where $A$ is the Fisher information matrix, i.e., the quadratic approximation to the KL divergence constraint:
$\meankl{}(\thold,\theta) \approx \frac{1}{2} (\theta-\thold)^T A (\theta-\thold)$, where $A_{ij} = \frac{\partial}{\partial\theta_i}\frac{\partial}{\partial\theta_j} \meankl{}(\thold,\theta)$.
In large-scale problems, it is prohibitively costly (with respect to computation and memory) to form the full matrix $A$ (or $A^{-1}$).
However, the conjugate gradient algorithm allows us to approximately solve the equation $Ax=b$ without forming this full matrix, when we merely have access to a function that computes matrix-vector products $y \rightarrow Ay$.
\Cref{sec:hvp} describes the most efficient way to compute matrix-vector products with the Fisher information matrix.
For additional exposition on the use of Hessian-vector products for optimizing neural network objectives, see \cite{martens2012training} and \cite{pascanu2013revisiting}.

Having computed the search direction $s \approx A^{-1}g$, we next need to compute the maximal step length $\beta$ such that $\theta + \beta s$ will satisfy the KL divergence constraint.
To do this, let $\delta = \meankl{} \approx \frac{1}{2} (\beta s)^T A (\beta s) = \frac{1}{2}\beta^2 s^T A s$.
From this, we obtain $\beta = \sqrt{2 \delta / s^T A s}$, where $\delta$ is the desired KL divergence.
The term $s^T A s$ can be computed through a single Hessian vector product, and it is also an intermediate result produced by the conjugate gradient algorithm.

Last, we use a line search to ensure improvement of the surrogate objective and satisfaction of the KL divergence constraint, both of which are nonlinear in the parameter vector $\theta$ (and thus depart from the linear and quadratic approximations used to compute the step).
We perform the line search on the objective $L_{\thold}(\theta) - \mathcal{X}[\meankl{}(\thold,\theta) \le \delta]$, where $\mathcal{X}[\dots]$ equals zero when its argument is true and $+\infty$ when it is false.
Starting with the maximal value of the step length $\beta$ computed in the previous paragraph, we shrink $\beta$ exponentially until the objective improves.
Without this line search, the algorithm occasionally computes large steps that cause a catastrophic degradation of performance.

\subsection{Computing the Fisher-Vector Product} \label{sec:hvp}

Here we will describe how to compute the matrix-vector product between the averaged Fisher information matrix and arbitrary vectors. This matrix-vector product enables us to perform the conjugate gradient algorithm.
Suppose that the parameterized policy maps from the input $x$ to ``distribution parameter'' vector $\muth(x)$, which parameterizes the distribution $\pi(u \given x)$.
Now the KL divergence for a given input $x$ can be written as follows:
\begin{align}
\kl{\pi_{\thold}(\cdot \given x)}{\pith(\cdot \given x)} = \klofmu(\muth(x),\muthold)
\end{align}
where $\klofmu$ is the KL divergence between the distributions corresponding to the two mean parameter vectors.
Differentiating $\klofmu$ twice with respect to $\theta$, we obtain
\begin{align}
\frac{\partial \mu_a(x)}{\partial \theta_i}
\frac{\partial \mu_b(x)}{\partial \theta_j}
\klofmu''_{ab}(\muth(x),\muthold)
+
\frac{\partial^2 \mu_a(x)}{\partial \theta_i \partial \theta_j} \klofmu'_{a}(\muth(x),\muthold)
\label{eq:klsecondderiv}
\end{align}
where the primes ($'$) indicate differentiation with respect to the first argument, and there is an implied summation over indices $a,b$.
The second term vanishes, leaving just the first term.
Let $J \defeq \frac{\partial \mu_a(x)}{\partial \theta_i}$ (the Jacobian), then the Fisher information matrix can be written in matrix form as $J^T M J$, where $M=kl''_{ab}(\muth(x),\muthold)$ is the Fisher information matrix of the distribution in terms of the mean parameter $\mu$ (as opposed to the parameter $\theta$).
This has a simple form for most parameterized distributions of interest.

The Fisher-vector product can now be written as a function $y \rightarrow J^T M J y$.
Multiplication by $J^T$ and $J$ can be performed by most automatic differentiation and neural network packages (multiplication by $J^T$ is the well-known backprop operation), and the operation for multiplication by $M$ can be derived for the distribution of interest.
Note that this Fisher-vector product is straightforward to average over a set of datapoints, i.e., inputs $x$ to $\mu$.

One could alternatively use a generic method for calculating Hessian-vector products using reverse mode automatic differentiation (\cite{wright1999numerical}, chapter 8), computing the Hessian of $\meankl{}$ with respect to $\theta$.
This method would be slightly less efficient as it does not exploit the fact that the second derivatives of $\mu(x)$ (i.e., the second term in \Cref{eq:klsecondderiv}) can be ignored, but may be substantially easier to implement.

We have described a procedure for computing the Fisher-vector product $y \rightarrow Ay$, where the Fisher information matrix is averaged over a set of inputs to the function $\mu$.
Computing the Fisher-vector product is typically about as expensive as computing the gradient of an objective that depends on $\mu(x)$ \cite{wright1999numerical}.
Furthermore, we need to compute $k$ of these Fisher-vector products per gradient, where $k$ is the number of iterations of the conjugate gradient algorithm we perform.
We found $k=10$ to be quite effective, and using higher $k$ did not result in faster policy improvement.
Hence, a na\"{i}ve implementation would spend more than $90\%$ of the computational effort on these Fisher-vector products.
However, we can greatly reduce this burden by subsampling the data for the computation of Fisher-vector product.
Since the Fisher information matrix merely acts as a metric, it can be computed on a subset of the data without severely degrading the quality of the final step.
Hence, we can compute it on $10\%$ of the data, and the total cost of Hessian-vector products will be about the same as computing the gradient. With this optimization, the computation of a natural gradient step $A^{-1} g$ does not incur a significant extra computational cost beyond computing the gradient $g$.

\section{Approximating Factored Policies with Neural Networks} \label{sec:nn}

The policy, which is a conditional probability distribution $\pith(a \given s)$, can be parameterized with a neural network.
This neural network maps (deterministically) from the state vector $s$ to a vector $\mu$, which specifies a distribution over action space.
Then we can compute the likelihood $p(a \given \mu)$ and sample $a \sim p(a \given \mu)$.

For our experiments with continuous state and action spaces, we used a Gaussian distribution, where the covariance matrix was diagonal and independent of the state.
A neural network with several fully-connected (dense) layers maps from the input features to the mean of a Gaussian distribution.
A separate set of parameters specifies the log standard deviation of each element.
More concretely, the parameters include a set of weights and biases for the neural network computing the mean, $\lrbrace{W_i, b_i}_{i=1}^L$, and a vector $r$ (log standard deviation) with the same dimension as $a$.
Then, the policy is defined by the normal distribution $\mathcal{N}\left( \mean=\net\left(s; \lrbrace{W_i, b_i}_{i=1}^L \right), \stdev=\exp(r) \right)$.
Here, $\mu = \lrbrack{\mean, \stdev}$.

For the experiments with discrete actions (Atari), we use a factored discrete action space, where each factor is parameterized as a categorical distribution.
That is, the action consists of a tuple $(a_1, a_2, \dots, a_K)$ of integers $a_k \in \lrbrace{1,2,\dots,N_k}$, and each of these components is assumed to have a categorical distribution, which is specified by a vector $\mu_k = [p_1,p_2, \dots, p_{N_k}]$.
Hence, $\mu$ is defined to be the concatenation of the factors' parameters: $\mu = [\mu_1, \mu_2, \dots, \mu_K]$ and has dimension $\dim \mu = \sum_{k=1}^{K} N_k$.
The components of $\mu$ are computed by taking applying a neural network to the input $s$ and then applying the softmax operator to each slice, yielding normalized probabilities for each factor.

\section{Experiment Parameters}
\label{sec:exptparams}

\begin{table}[h!]
\centering
\begin{tabular}{l|ccc}
&   Swimmer & Hopper & Walker\\
\hline
State space dim. &   10  &  12   &  20 \\
Control space dim.  &    2  &   3   &  6   \\
Total num. policy params & 364 & 4806 & 8206 \\
\hline
Sim. steps per iter. & 50K  & 1M &  1M \\
Policy iter. & 200  & 200 & 200 \\
Stepsize ($\meankl{}{}$) & 0.01  & 0.01 & 0.01\\
Hidden layer size & 30 & 50  & 50 \\
Discount ($\gamma$) & 0.99 & 0.99 & 0.99 \\
Vine: rollout length  & 50  & 100 & 100 \\
Vine: rollouts per state & 4 & 4 & 4 \\
Vine: $Q$-values per batch & 500  & 2500 & 2500\\
Vine: num. rollouts for sampling & 16 &  16 & 16 \\
Vine: len. rollouts for sampling & 1000 & 1000 & 1000 \\
Vine: computation time (minutes) & 2 & 14 & 40 \\
SP: num. path & 50 & 1000 & 10000 \\
SP: path len. & 1000 & 1000 & 1000 \\
SP: computation time & 5 & 35 & 100 \\
\end{tabular}
\caption{Parameters for continuous control tasks, vine and single path (SP) algorithms.\label{table:contctrlparams}}
\end{table}

\begin{table}[H]
\centering
\begin{tabular}{l|c}
& All games \\
\hline
Total num. policy params & 33500 \\
\hline
Vine: Sim. steps per iter. & 400K \\
SP: Sim. steps per iter. & 100K \\
Policy iter. & 500  \\
Stepsize ($\meankl{}{}$) & 0.01\\
Discount ($\gamma$) & 0.99 \\
Vine: rollouts per state & $\approx 4$ \\
Vine: computation time & $\approx 30$ hrs \\
SP: computation time & $\approx 30$ hrs \\
\end{tabular}
\caption{Parameters used for Atari domain.\label{table:atariparams}}
\end{table}

\section{Learning Curves for the Atari Domain}
\begin{figure}[h!tp]
\centering
{\includegraphics[width=0.32\textwidth]{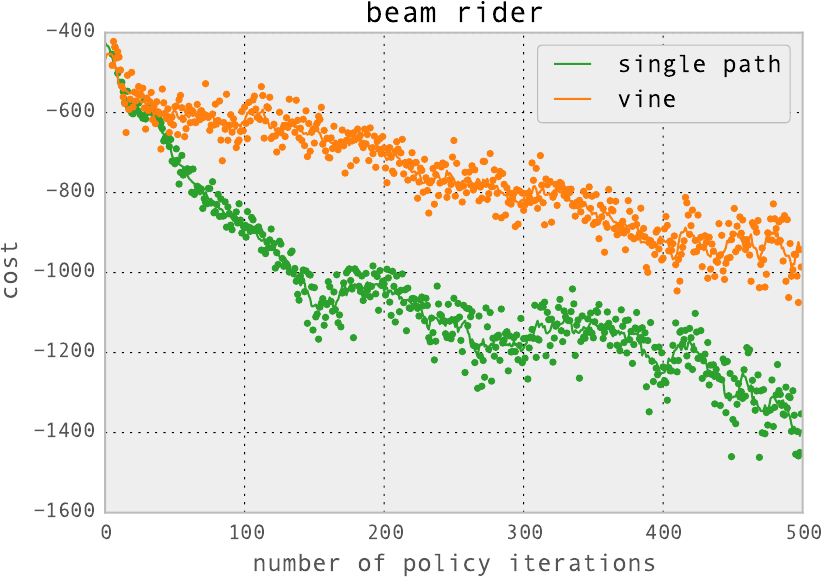}}
\quad
{\includegraphics[width=0.32\textwidth]{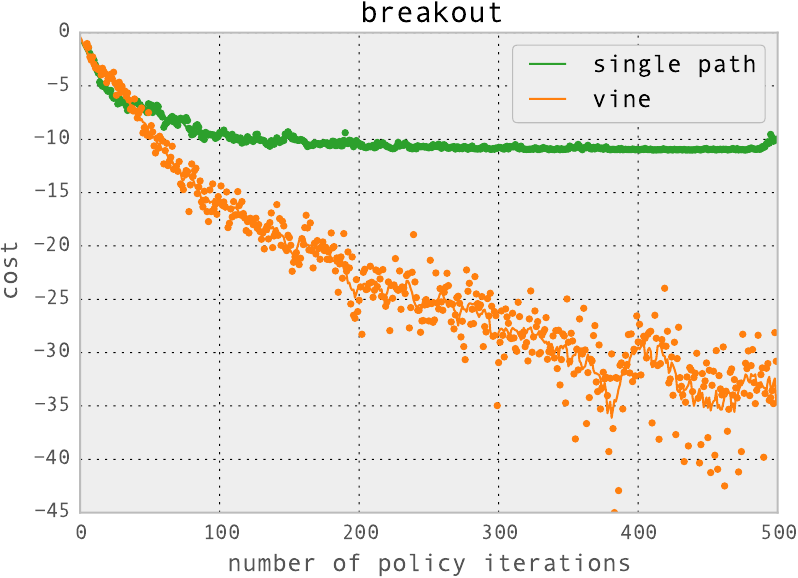}}
{\includegraphics[width=0.32\textwidth]{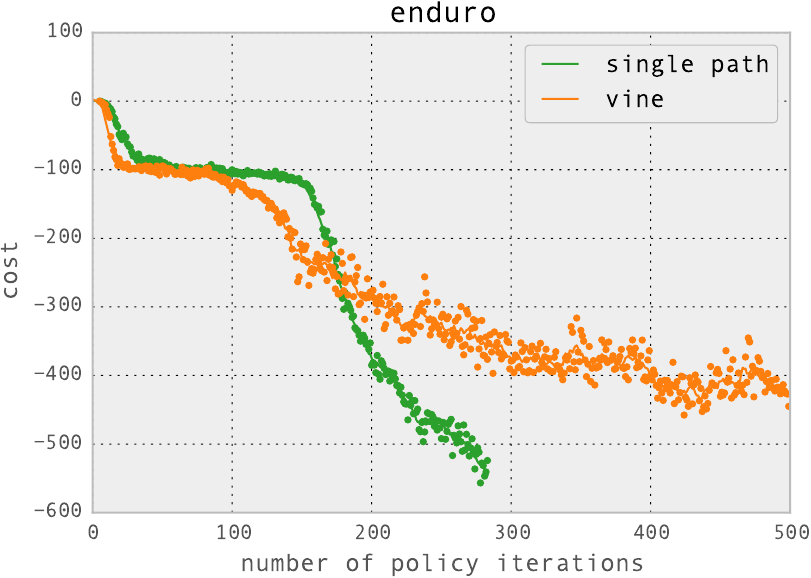}}
{\includegraphics[width=0.32\textwidth]{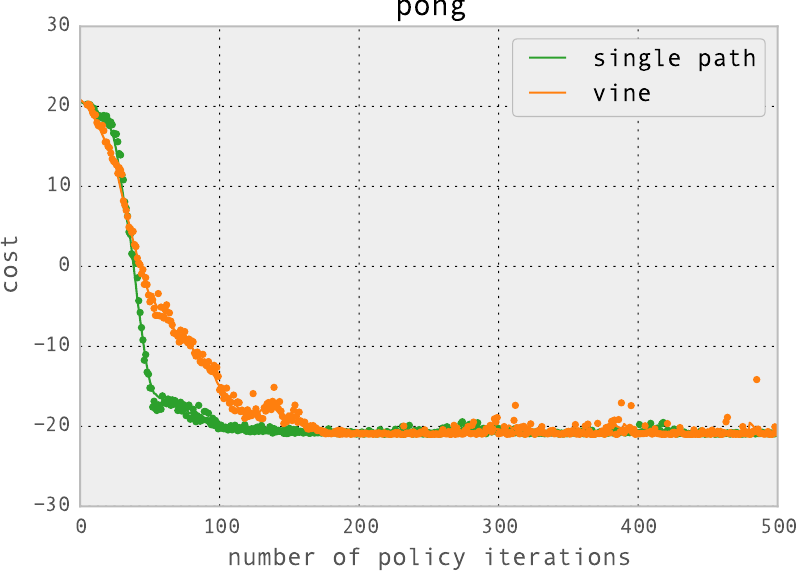}}
{\includegraphics[width=0.32\textwidth]{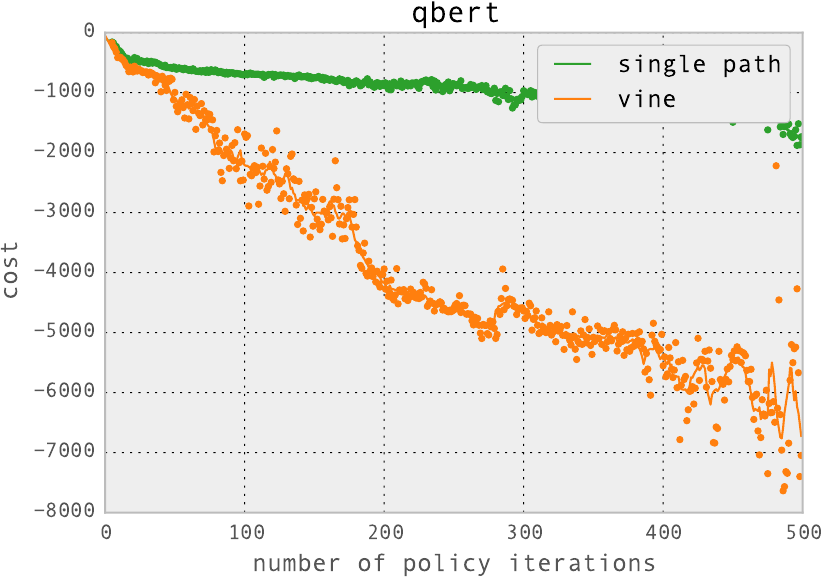}}
{\includegraphics[width=0.32\textwidth]{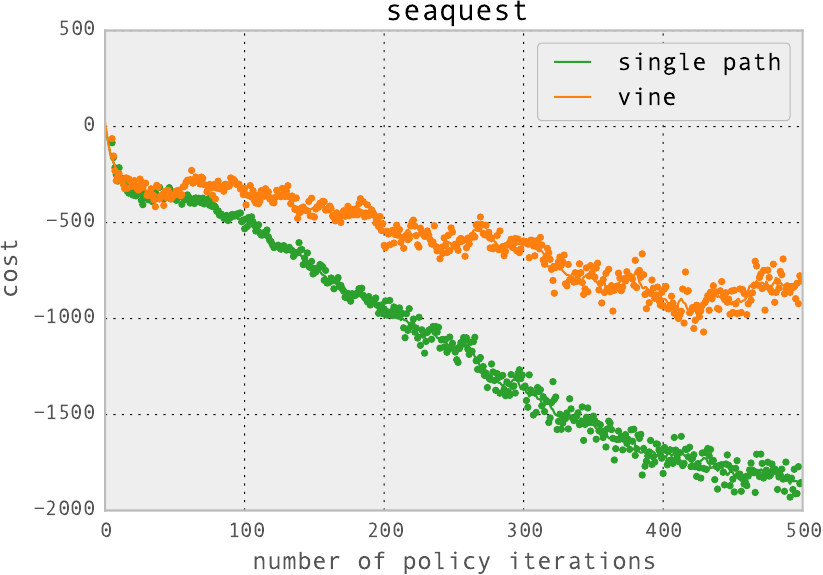}}
{\includegraphics[width=0.32\textwidth]{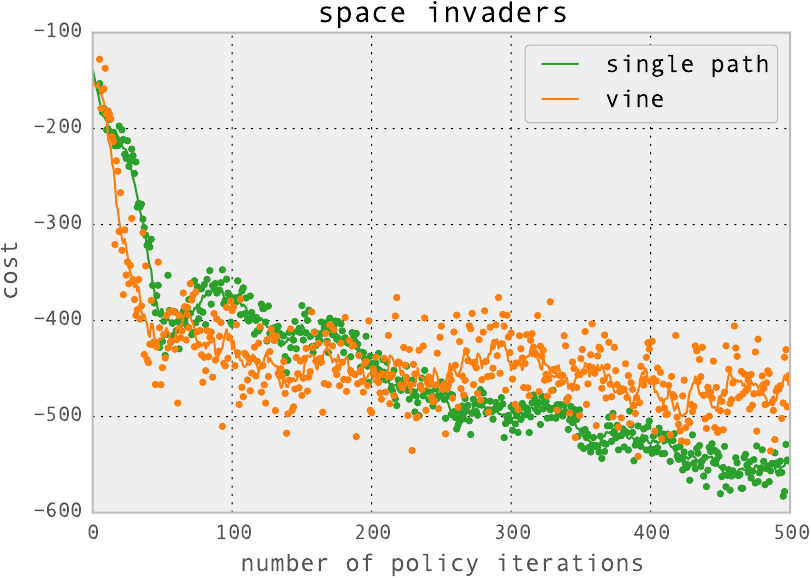}}
\caption{Learning curves for the \Atari{} domain. For historical reasons, the plots show cost = negative reward.}
\label{fig:atari_plots}
\end{figure}

}
\end{document}